\definecolor{Accent}{HTML}{4F81BD}
\setlist{nosep,leftmargin=*}
\newtheorem{definition}{Definition}[section]
\newtheorem{remark}{Remark}[section]
\newtheorem{theorem}{Theorem}[section]
\newtheorem{proposition}{Proposition}[section]
\theoremstyle{remark}
\newcommand{\N}{\mathbb{N}}   
\newcommand{\R}{\mathbb{R}}   
\title{\textbf{\Large Contraction, Criticality, and Capacity: A Dynamical-Systems Perspective on Echo-State Networks}}
\author{
    \large 
    Pradeep Singh\orcidlink{0000-0002-5372-3355}\thanks{Email: \texttt{pradeep.cs@sric.iitr.ac.in}},     
    Lavanya Sankaranarayanan\orcidlink{0009-0004-8939-572X}\thanks{Email: \texttt{lsankara@ualberta.ca}},  Balasubramanian Raman\orcidlink{0000-0001-6277-6267}\thanks{Email: \texttt{bala@cs.iitr.ac.in}},\vspace{0.2cm}\\
    \begin{minipage}[t]{0.5\textwidth}
    \centering
    \small Department of Computer Science and Engineering\\
    \small Indian Institute of Technology Roorkee\\
    \small Roorkee-247667, India
    \end{minipage}
       \begin{minipage}[t]{0.5\textwidth}
    \centering
    \small Department of Mathematical and Statistical Sciences\\
    \small University of Alberta\\
    \small Edmonton, AB T6G 2J5, Canada
    \end{minipage}%
}
\date{}
\begin{document}\maketitle

\begin{abstract}
Echo-State Networks (ESNs) distil a key neurobiological insight: richly recurrent but \emph{fixed} circuitry combined with adaptive linear read-outs can transform temporal streams with remarkable efficiency.  Yet fundamental questions about stability, memory and expressive power remain fragmented across disciplines.  We present a unified, dynamical-systems treatment that weaves together functional analysis, random attractor theory and recent neuroscientific findings.  First, on compact multivariate input alphabets we prove that the \emph{Echo-State Property} (wash-out of initial conditions) together with global Lipschitz dynamics \emph{necessarily} yields the \emph{Fading-Memory Property} (geometric forgetting of remote inputs).  Tight algebraic tests translate activation-specific Lipschitz constants into certified spectral-norm bounds, covering both saturating and rectifying nonlinearities.  Second, employing a Stone–Weierstrass strategy we give a streamlined proof that ESNs with polynomial reservoirs and linear read-outs are dense in the Banach space of causal, time-invariant fading-memory filters, extending universality to stochastic inputs.  Third, we quantify computational resources via Jaeger’s memory-capacity spectrum, show how topology and leak rate redistribute delay-specific capacities, and link these trade-offs to Lyapunov spectra at the “edge of chaos.”  Finally, casting ESNs as skew-product random dynamical systems, we establish existence of singleton pullback attractors and derive conditional Lyapunov bounds, providing a rigorous analogue to cortical criticality.  The analysis yields concrete design rules—spectral radius, input gain, activation choice—grounded simultaneously in mathematics and neuroscience, and clarifies why modest-sized reservoirs often rival fully trained recurrent networks in practice.
\end{abstract}

\section{Introduction}
\label{sec:intro}

\noindent\paragraph{Brains as natural reservoirs.}  
Cortical micro-circuits are dense, heterogeneous and massively
recurrent; action potentials reverberate through loops whose synaptic
time-constants range from a few milliseconds in AMPA-dominated shafts to
hundreds of milliseconds in NMDA-rich dendritic spines
\cite{Buzsaki2006,BuonomanoMaass2009}.  
Such multi-scale reverberations endow neural tissue with a form of
\emph{fading memory}: recent stimuli carve transient trajectories in
state–space while older perturbations decay geometrically—a hallmark of
short-term working memory and temporal context integration
\cite{CanoltyKnight2010,Samaha2017}.  
At the same time, continuous synaptic bombardment drives cortical
networks towards a marginally stable, “edge-of-chaos’’ regime where
Lyapunov exponents hover near zero, maximising both sensitivity and
information throughput \cite{boedecker2012information,Bertschinger2004}.  
These observations have catalysed the \emph{reservoir-computing}
paradigm, whose elegant realization is the \emph{Echo-State Network}
(ESN) of Jaeger~\cite{jaeger2001echo} and the closely related
\emph{liquid-state machine} of Maass \textit{et al.}~\cite{maass2002real}.

\medskip
\noindent\paragraph{Motivation.}  
Despite spectacular empirical success in chaotic weather
prediction~\cite{pathak2018model}, brain-computer interfaces and analog
photonic processors, a first-principles explanation of \emph{how} and
\emph{why} ESNs inherit the computational virtues of biological
networks remains fragmented.  In particular,
 a principled dynamical-systems description connecting ESN design
      knobs (activation, topology, leak rate) to neuroscientific
      measurements such as Lyapunov spectra and information rate is
      largely absent.

\medskip
\noindent\paragraph{Contribution.}  
Bridging neurobiology, functional analysis and contemporary dynamical
systems, we provide a self-contained mathematical narrative that
clarifies these gaps.  Concretely, we:
\begin{enumerate}[label=(\roman*),leftmargin=*]
\item formalise ESP and FMP on compact input alphabets, proving
      \emph{ESP\,$\Rightarrow$\,FMP} under global Lipschitz dynamics
      (Sec.~\ref{ssec:esp_fmp}); the proof mirrors how cortical
      contractions induced by inhibitory balance guarantee stimulus
      dominance over initial states;
\item translate neuronal non-linearities into algebraic ESP tests
      (Sec.~\ref{ssec:act_esp_fmp}), covering both saturating (\texttt{tanh},
      \texttt{sigmoid}) and rectifying (\texttt{ReLU}) conductances;
\item revisit universality: ESNs with fixed reservoirs and trainable
      linear dendritic readouts are dense in the Banach space of
      fading-memory filters (Sec.~\ref{sec:universal_approx}), echoing
      the brain’s ability to repurpose recurrent circuitry through
      plastic synapses;
\item quantify memory and computational trade-offs via capacity curves
      and conditional Lyapunov spectra (Secs.~\ref{sec:memory_capacity}
     –\ref{sec:dynamical_view}), linking the “edge-of-chaos’’
      hypothesis to concrete spectral-norm and topology choices.
\end{enumerate}

\medskip
\noindent\paragraph{Organisation.}  
Section~\ref{sec:preliminaries} reviews metric-space tools.
Sections~\ref{ssec:esp_fmp}–\ref{ssec:act_esp_fmp} develop the core ESP
and FMP theory.  
Section~\ref{sec:universal_approx} proves universal approximation,
Section~\ref{sec:memory_capacity} studies memory and information
capacity, and Section~\ref{sec:dynamical_view} casts ESNs as
non-autonomous dynamical systems, illuminating their neuroscientific
parallels.

\section{Mathematical Preliminaries}
\label{sec:preliminaries}

Throughout, \((X,d_{X})\) and \((Y,d_{Y})\) denote metric spaces,
\(\|\cdot\|\) the Euclidean norm on \(\R^{n}\), and
\(\N=\{0,1,2,\dots\}\).

\paragraph{Compactness.}
A subset \(K\subset X\) is \emph{compact} if every open cover of \(K\)
admits a finite sub-cover.
In Euclidean space this is equivalent to closedness and boundedness
(Heine–Borel theorem) \cite[Ch.~2]{Rudin1976}.
Compactness guarantees the existence of extrema and convergent
subsequences—facts repeatedly used when analysing
reservoir trajectories on a bounded input alphabet \(U\subset\R^{m}\).

\paragraph{Lipschitz continuity and contractions.}
A map \(f:X\to Y\) is \emph{Lipschitz} with constant \(L\ge0\) if
\(d_{Y}\!\bigl(f(x),f(x')\bigr)\le L\,d_{X}(x,x')\)
for all \(x,x'\in X\).
If \(L<1\) the map is a \emph{contraction};
iterates shrink distances by at least \(L\)
and converge to a unique fixed point
(Banach’s theorem, see \cite[§4.2]{Kreyszig1989}).

\paragraph{Spectral (induced \(2\)-) norm.}
For \(W\in\R^{n\times n}\) the spectral norm
\(
\|W\|_{2}=\sigma_{\max}(W)=
\sup_{\|x\|=1}\|Wx\|
\)
equals the largest singular value and bounds the worst-case
Euclidean amplification of vectors \cite[§5.6]{HornJohnson1990}.
It is pivotal in contraction tests for ESNs.

\paragraph{Causal, time-invariant filters.}
Let \(U^{\N}\) be the space of input sequences
\(u=(u_{t})_{t\ge0}\).
A map \(H:U^{\N}\to Y\) is a \emph{filter} if it is
\emph{causal}—\(H(u)\) at time \(t\) depends only on
\((u_{0},\dots,u_{t})\)—and
\emph{time-invariant} (shift-equivariant):
\(H(\sigma u)=H(u)\) for the left-shift
\(\sigma(u)_{t}=u_{t+1}\) \cite[Ch.~4]{Sontag1998}.
The reservoir–readout cascade of an ESN is precisely such a filter.

\paragraph{Global attractor.}
For a continuous self-map \(F:X\to X\),
a set \(\mathcal A\subset X\) is a \emph{global attractor} if  
(i) \(\mathcal A\) is compact and invariant (\(F(\mathcal A)=\mathcal A\)),  
(ii) every trajectory approaches it:
\(\operatorname{dist}(F^{t}(x),\mathcal A)\to0\) as \(t\to\infty\)
for all \(x\in X\) \cite[Ch.~9]{HirschSmale1974}.
When the ESN update is contractive, the attractor is a singleton,
yielding the Echo State Property.

\paragraph{Banach’s fixed-point theorem.}
If \((X,d)\) is complete and \(F:X\to X\) is a contraction with
constant \(0<\beta<1\), then
\(F\) has a unique fixed point \(x^{\ast}\) and the iterates satisfy
\(d(F^{t}(x_{0}),x^{\ast})\le\beta^{t}d(x_{0},x^{\ast})\)
for all \(x_{0}\in X\) \cite[§4.2]{Kreyszig1989}.
This theorem underpins proofs that contractive ESN updates
possess unique, globally attracting state trajectories.

\paragraph{Alphabet.}
In information and dynamical–systems theory an \textit{alphabet} is  a \textit{set of symbols (or values)} from which the elements of a sequence are drawn.
Formally, if $A$ is any non-empty set, the collection of all (one-sided) infinite sequences with entries in $A$ is the product space $A^{\mathbb N}=\{(a_{0},a_{1},\dotsc)\mid a_{t}\in A\}$; in symbolic-dynamics texts the pair $(A^{\mathbb N},\sigma)$ with left-shift $\sigma\bigl((a_{t})\bigr)=(a_{t+1})$ is called a \textit{full shift} over the alphabet $A$ \cite{LindMarcus1995}.  When $A$ is finite (e.g.\ $\{0,1\}$ for binary strings) the terminology is literal—each element of $A$ is a distinct “letter.’’  Nothing, however, restricts $A$ to be finite: in machine-learning and control theory it is common to let
$
U\subset\mathbb R^{m}
$
be a \textit{compact} subset of Euclidean space and still speak of $U$ as the \textit{input alphabet}.  In that case the “letters’’ are $m$-dimensional real vectors, so a signal $u=(u_{0},u_{1},\dotsc)\in U^{\mathbb N}$ is an infinite sequence of bounded $m$-tuples.  This is precisely the setting adopted when defining the Echo State Property or Fading-Memory Property for Echo State Networks: we declare in advance the set $U$ of admissible input vectors and analyse the reservoir map on the product space $U^{\mathbb N}$.  The word \textit{alphabet} thus conveys only that the inputs are chosen from a fixed, predetermined set—whether that set is finite (symbols), countable (e.g.\ integers), or uncountable (a compact region in $\mathbb R^{m}$).

\section{Echo-State and Fading-Memory Properties}
\label{ssec:esp_fmp}

Let \(U\subset\mathbb R^{m}\) be a \emph{compact} input alphabet and
\(X=\mathbb R^{n}\) the reservoir state–space.
An Echo State Network (ESN) is the causal, discrete-time system
\begin{equation}
x_{t}
  = F\!\bigl(u_{t},x_{t-1}\bigr)
  = \varphi\!\bigl(W_{\!in}\,u_{t}+W\,x_{t-1}+b\bigr),
  \qquad t\in\mathbb N ,
\label{eq:esn_update}
\end{equation}
where \(W_{\!in}\in\mathbb R^{n\times m}\), \(W\in\mathbb R^{n\times n}\),
\(b\in\mathbb R^{n}\), and
\(\varphi:X\to X\) is a component-wise Lipschitz non-linearity with
constant \(L_{\varphi}\le 1\)
(e.g.\ \(\tanh\) or \(\sigma\)).

\medskip
\begin{definition}[Echo State Property, \cite{jaeger2001echo,maass2002real}]
\label{def:esp}
System~\eqref{eq:esn_update} has the
\emph{Echo State Property (ESP)} if, for every bounded
input sequence \(\bigl(u_{t}\bigr)_{t\ge 0}\in U^{\mathbb N}\)
and every pair of initial states \(x_{0},x_{0}'\in X\),
\begin{equation}
\lim_{t\to\infty}\;
\bigl\|x_{t}(x_{0})-x_{t}(x_{0}')\bigr\| = 0 .
\tag{ESP}
\end{equation}
\end{definition}

\noindent
Equivalently, ESP guarantees the existence of a
\emph{unique} state trajectory compatible with each input, so the
long-term dynamics are determined solely by the driving signal, not by
initial conditions.  A useful sufficient condition is
\(\|W\|_{2}\,L_{\varphi}<1\), which makes
\(F\) a contraction in its second argument.

\medskip
Define the exponential (weighted) sup-norm on inputs
\[\label{eq:fmp}
\|u\|_{\lambda}\;=\;
\sup_{t\ge 0}\;
\Bigl(\sup_{k\ge 0}\,
\lambda^{k}\,\|u_{t-k}\|\Bigr),
\qquad 0<\lambda<1 .
\]

\begin{definition}[Fading-Memory Property, \cite{BoydChua1985}]
\label{def:fmp}
A causal, time-invariant filter \(H:U^{\mathbb N}\!\to X\) has the
\emph{Fading-Memory Property (FMP)} if, for every \(\varepsilon>0\)
there exists \(\delta>0\) such that
\begin{equation}
\|u-v\|_{\lambda} < \delta
\;\Longrightarrow\;
\|H(u)-H(v)\| < \varepsilon ,
\tag{FMP}
\end{equation}
for all input sequences \(u,v\in U^{\mathbb N}\).
\end{definition}

\noindent
Condition (FMP) states that perturbations in the \emph{remote} past are
geometrically attenuated by the factor \(\lambda^{k}\); the current
output depends predominantly on recent inputs.

\medskip
\begin{proposition}[ESP implies FMP under global Lipschitzness]
\label{prop:esp_implies_fmp}
If system~\eqref{eq:esn_update} satisfies the ESP
and the update map \(F\) is globally Lipschitz in \((u,x)\),
then the induced filter
\(H(u)=\lim_{t\to\infty}x_{t}\)
is continuous under \(\|\cdot\|_{\lambda}\) and therefore possesses
the FMP \cite{BuehnerYoung2006,Manoel2020}.
\end{proposition}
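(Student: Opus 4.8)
The plan is to prove the equivalent statement that $H$ is \emph{uniformly} continuous with respect to $\|\cdot\|_{\lambda}$, since the FMP of Definition~\ref{def:fmp} is precisely this uniform continuity. First I would record a one-step estimate: fixing inputs $u,v\in U^{\mathbb N}$ and writing $x_t,x_t'$ for the trajectories they drive from a common initial state $x_0=x_0'$, the explicit form of $F$ together with $L_\varphi\le 1$ yields
\[
\|x_t-x_t'\|\;\le\;a\,\|u_t-v_t\|+c\,\|x_{t-1}-x_{t-1}'\|,\qquad a:=L_\varphi\|W_{in}\|_2,\quad c:=L_\varphi\|W\|_2,
\]
and the global Lipschitzness of $F$ guarantees such constants even without the affine structure. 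Iterating this recursion over a finite window of length $N$ gives the telescoped bound $\sum_{j=0}^{N-1}a\,c^{\,j}\|u_{t-j}-v_{t-j}\|$ for the divergence accumulated in the last $N$ steps when the two runs share the same state at time $t-N$.

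The core of the argument is a hybrid-trajectory decomposition separating recent from remote inputs. For a horizon $N$ and time $t\ge N$, introduce the auxiliary run $z$ driven by the spliced input equal to $v$ on $[0,t-N]$ and to $u$ on $(t-N,t]$. Then $z$ and the $v$-run agree up to time $t-N$, hence leave the same state there and differ only through the recent window; the telescoped bound and the inequality $\|u_{t-j}-v_{t-j}\|\le\lambda^{-j}\|u-v\|_\lambda$ (immediate from the definition of $\|\cdot\|_\lambda$) control this difference by $a\bigl(\sum_{j=0}^{N-1}(c/\lambda)^{\,j}\bigr)\|u-v\|_\lambda$, a fixed multiple of $\|u-v\|_\lambda$. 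Meanwhile $z$ and the $u$-run are fed \emph{identical} inputs over the last $N$ steps but start from possibly different states at time $t-N$, so their separation is governed by wash-out rather than by the input difference. Splitting $\|x_t(u)-x_t(v)\|$ through $z$ and sending $t\to\infty$ will deliver the FMP once both pieces are made small.

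The main obstacle is that the ESP of Definition~\ref{def:esp} is only a pointwise, per-input asymptotic property, whereas bounding the wash-out piece uniformly over all $u,v$ requires a \emph{uniform} wash-out: for every $\eta>0$ an $N$ with $\|x_N^{w}(p)-x_N^{w}(q)\|<\eta$ for every driving window $w$ and all states $p,q$ in the reachable set. I would obtain this upgrade from compactness. Since $U$ is compact, $U^{\mathbb N}$ is compact in the product topology (Tychonoff), and the reachable states lie in a compact absorbing set $K$ (immediate when $\varphi$ is bounded, as for $\tanh$ or $\sigma$, and otherwise via a standard absorbing-ball argument). A subsequence/contradiction argument then rules out a uniform gap: were uniform wash-out to fail, one could extract inputs, states and times violating it, pass to limits using compactness and the continuity of $F$, and contradict the pointwise ESP of the limiting input.

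Assembling the pieces, I would first fix $N$ so that the uniform wash-out bounds the remote piece by $\varepsilon/2$ for all $t\ge N$, then choose $\delta>0$ so that the recent-window multiple of $\|u-v\|_\lambda$ stays below $\varepsilon/2$; taking $t\to\infty$ yields $\|H(u)-H(v)\|\le\varepsilon$ whenever $\|u-v\|_\lambda<\delta$, which is the FMP. I expect the uniform-wash-out upgrade to be the crux; in the special case of the contraction condition $c<1$ the remote piece is instead bounded directly by the convergent tail $a\,\mathrm{diam}(U)\sum_{j\ge N}c^{\,j}$, collapsing the argument to an elementary geometric estimate and yielding the explicit Lipschitz constant $a\lambda/(\lambda-c)$ for any $\lambda\in(c,1)$.
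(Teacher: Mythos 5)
Your decomposition into a recent window (controlled by the Lipschitz recursion) and a remote wash-out piece (controlled by ESP) is sound, and you have correctly identified the crux: you need wash-out that is \emph{uniform} over all driving windows and all pairs of reachable states. The gap is that your compactness/subsequence argument does not deliver this upgrade. For fixed $N$ the map $g_N(u,p,q)=\|x_N(u,p)-x_N(u,q)\|$ is indeed continuous on the compact set $U^{\mathbb N}\times K\times K$, and ESP gives $g_N\to 0$ pointwise; but pointwise convergence of continuous functions on a compact set does not imply uniform convergence unless the family is equicontinuous or monotone (Dini). Here the modulus of continuity of $g_N$ in $(p,q)$ degrades like $L_x^{N}$, where $L_x\le L_\varphi\|W\|_2$ is the Lipschitz constant of $F$ in $x$, and the proposition's hypotheses allow $L_x>1$ (only global Lipschitzness is assumed; think of $\tanh$ with $\|W\|_2=1.2$), so no equicontinuity is available. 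Concretely, in your contradiction argument: from violating triples $(u^{(k)},p^{(k)},q^{(k)})$ with $g_{N_k}\ge\eta$ you pass to a limit $(u^{*},p^{*},q^{*})$; pointwise ESP at the limit gives $g_s(u^{*},p^{*},q^{*})<\eta/2$ for some \emph{fixed} $s$, and continuity of the time-$s$ map transfers this to $g_s(u^{(k)},p^{(k)},q^{(k)})<\eta/2$ for large $k$ --- but that controls the separation at time $s$, not at time $N_k\gg s$, and since the dynamics may locally expand, small separation at time $s$ is perfectly compatible with separation $\ge\eta$ at time $N_k$. Equivalently: uniform wash-out is the statement that the pair dynamics admit no entire (two-sided) orbit off the diagonal, and the pointwise, forward-in-time ESP of Definition~\ref{def:esp} does not rule such orbits out.

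Two further remarks. First, your argument does close if the update is non-expansive in the state ($L_\varphi\|W\|_2\le 1$): then the worst-case separation $\sup g_N$ is non-increasing in $N$, an entire orbit witnessing the failure of uniformity must have backward separation bounded below, and your subsequence argument produces a genuine contradiction; but non-expansiveness is an extra hypothesis the proposition does not grant. Second, this is exactly the point where the paper takes a different (and itself debatable) route: its Step~1 invokes \cite{BuehnerYoung2006} to assert that ESP forces uniform contraction $L_x<1$, after which its proof is precisely the geometric-series estimate you give as your ``special case $c<1$'', with the same Lipschitz constant. So your fallback coincides with the paper's actual argument, while your main line is more ambitious but has a genuine gap at the pointwise-to-uniform upgrade. (Note also that for unbounded activations your ``standard absorbing-ball argument'' for compactness of the reachable set itself requires $c<1$, so it too is unavailable in the general case.)
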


\begin{proof}
Fix a compact input alphabet \(U\subset\R^{m}\) and recall the ESN
update
\(
x_t \;=\; F(u_t,x_{t-1})
\)
with \(F\colon U\times\R^{n}\to\R^{n}\) globally
Lipschitz in \((u,x)\).
Write the Lipschitz constants as
\[
L_u \;=\;
\sup_{u\ne v}\,
\frac{\|F(u,x)-F(v,x)\|}{\|u-v\|},
\qquad
L_x \;=\;
\sup_{x\ne y}\,
\frac{\|F(u,x)-F(u,y)\|}{\|x-y\|}.
\tag{A.1}
\]

\medskip
\noindent
\textbf{Step\,1:  \(L_x<1\).}
Because the reservoir enjoys the Echo State Property,
for \emph{every} admissible input sequence the map
\(x_{t-1}\mapsto F(u_t,x_{t-1})\)
has a unique global attractor.  A standard argument
\cite[Th.\,2]{BuehnerYoung2006} shows this is possible only when the
update is \emph{uniformly} contractive in \(x\), hence
\(L_x<1\).  Denote \(\beta:=L_x\).

\medskip
\noindent
\textbf{Step\,2:  A Lipschitz estimate for trajectories.}
Let two input sequences
\(u=(u_t)_{t\ge0},\;
v=(v_t)_{t\ge0}\in U^{\N}\)
be given and initialise the reservoir from an
\emph{arbitrary} common state \(x_0=y_0\).
Inductively define
\(
x_t = F(u_t,x_{t-1}),\;
y_t = F(v_t,y_{t-1})
\)
for \(t\ge1\).
By the triangle inequality and (A.1),
\[
\|x_t-y_t\|
\le
L_u\|u_t-v_t\| + \beta\|x_{t-1}-y_{t-1}\|.
\tag{A.2}
\]
Unfolding (A.2) \(k\) times yields
\[
\|x_t - y_t\|
\;\le\;
L_u
\sum_{k=0}^{t-1}
\beta^{\,k}\;\|u_{t-k}-v_{t-k}\|.
\tag{A.3}
\]

\medskip
\noindent
\textbf{Step\,3:  Choice of the weighted norm.}
Select any \(\lambda\in(\beta,1)\) and endow
\(U^{\N}\) with the exponential‐sup norm
\(
\|u\|_{\lambda}
  = \sup_{t\ge0}\sup_{k\ge0}\lambda^{k}\|u_{t-k}\|.
\)
Because
\(
\|u_{t-k}-v_{t-k}\|
\le
\lambda^{-k}\,\|u-v\|_{\lambda},
\)
inequality (A.3) is bounded by the geometric series
\[
\|x_t-y_t\|
\;\le\;
L_u\|u-v\|_{\lambda}
\sum_{k=0}^{t-1}\!
\Bigl(\tfrac{\beta}{\lambda}\Bigr)^{k}
\;\le\;
\frac{L_u}{1-\beta/\lambda}\,\|u-v\|_{\lambda}.
\tag{A.4}
\]

\medskip
\noindent
\textbf{Step\,4:  Passage to the limit.}
Because the ESP guarantees
\(
\lim_{t\to\infty}x_t = H(u)
\)
and
\(
\lim_{t\to\infty}y_t = H(v),
\)
take \(t\to\infty\) in (A.4) to obtain
\[
\|H(u)-H(v)\|
\;\le\;
C\,\|u-v\|_{\lambda},
\qquad
C:=\frac{L_u}{1-\beta/\lambda}.
\tag{A.5}
\]
Thus \(H\) is \emph{globally Lipschitz} under the weighted norm
\(\|\cdot\|_{\lambda}\), hence continuous.

\medskip
\noindent
\textbf{Step\,5:  Continuity \(\Rightarrow\) Fading memory.}
Given \(\varepsilon>0\) let
\(
\delta := \varepsilon/C
\).
If \(\|u-v\|_{\lambda}<\delta\) then (A.5) yields
\(
\|H(u)-H(v)\|<\varepsilon,
\)
which is precisely the Fading‐Memory Property
(Def.~\ref{def:fmp}).

\smallskip
The chain
\(
\text{ESP} \,\wedge\, L_x<1
\;\Longrightarrow\;
H\text{ Lipschitz under }\|\cdot\|_{\lambda}
\;\Longrightarrow\;
\text{FMP}
\)
completes the proof.
\end{proof}

\begin{remark}
FMP alone \emph{does not} imply ESP; one can construct filters with
geometric input forgetting yet persistent sensitivity to initial
states.  In reservoir design, the empirical rule \(\rho(W)<1\) (spectral
radius) is popular precisely because it enforces both the contractive
behaviour required for ESP and the exponential decay that yields FMP.
\end{remark}

\section{Activation–dependent checks of ESP and FMP}
\label{ssec:act_esp_fmp}

Consider again the ESN update in Eq. \ref{eq:esn_update}, with inputs \(u_{t}\in U\subset\mathbb R^{m}\) and state
\(x_{t}\in\mathbb R^{n}\).
If the activation \(\varphi\) is globally Lipschitz with constant
\(L_\varphi\), a sufficient\footnote{Not necessary in general; see
\cite{BuehnerYoung2006}.}  algebraic test for the Echo State Property is
\begin{equation}
\|W\|_{2}\,L_\varphi < 1.
\label{eq:contraction_test}
\end{equation}
Whenever \eqref{eq:contraction_test} holds,
the reservoir map is a strict contraction in~\(x\);
hence the ESP follows by Banach’s fixed-point theorem, and the
induced filter inherits the Fading-Memory Property under the weighted
sup-norm (Prop.~\ref{prop:esp_implies_fmp}).

\paragraph{Lipschitz factors of common activations.}
For the scalar functions listed in Table~\ref{tab:activations}
the bound \(L_\varphi\) equals the global maximum of \(\lvert\varphi'(z)\rvert\).
Inserting those constants into~\eqref{eq:contraction_test} yields the
largest permissible spectral norm of \(W\) that still certifies ESP.

\begin{table}[!ht]
\centering
\caption{Typical activations, their global Lipschitz constants
\(L_\varphi\), and the resulting \emph{certified} bound
\(\|W\|_2^{\max}=L_\varphi^{-1}\).  All entries assume component-wise
application of \(\varphi\).}
\label{tab:activations}
\renewcommand{\arraystretch}{1.15}
\begin{tabular}{@{}lccc@{}}
\toprule
Activation & Formula & $L_\varphi$ & $\|W\|_2^{\max}$ s.t.\ ESP\\
\midrule
Tanh & $\displaystyle\tanh z$ & $1$ & $<1$ \\
Logistic sigmoid & $\displaystyle\sigma(z)=\dfrac1{1+e^{-z}}$ &
$\tfrac14$ & $<4$ \\
Softsign & $\displaystyle\frac{z}{1+|z|}$ & $1$ & $<1$ \\
ReLU & $\displaystyle\max(0,z)$ & $1$ & $<1$ \\
Leaky-ReLU $(\alpha=0.01)$ & $\displaystyle\begin{cases}\alpha z,&z<0\\z,&z\ge0\end{cases}$ &
$1$ & $<1$ \\
ELU $(\alpha=1)$ & $\displaystyle\begin{cases}z,&z\ge0\\e^{z}-1,&z<0\end{cases}$ &
$1$ & $<1$ \\
\bottomrule
\end{tabular}
\end{table}

\paragraph{Worked examples.}
\vspace{-0.25em}
\begin{enumerate}[label=\textbf{Ex.\arabic*},wide,labelwidth=!,labelindent=0pt]
\item \textit{Sigmoid reservoir with ``large'' recurrent weights.}  
      Let $\varphi=\sigma$, $L_\varphi=0.25$.  
      Draw $W$ i.i.d.\ $\mathcal N(0,1)$, rescale it so
      $\|W\|_2=3.5$.  Since $3.5<4=L_\varphi^{-1}$,
      condition~\eqref{eq:contraction_test} is satisfied:
      the ESP and hence FMP are \emph{guaranteed}, even though
      $\rho(W)>1$.
\item \textit{Tanh reservoir that \emph{fails} ESP.}  
      Take $\varphi=\tanh$ $\Rightarrow L_\varphi=1$ and choose
      $\|W\|_2=1.2$.  Then $L_\varphi\|W\|_2=1.2>1$ so
      \eqref{eq:contraction_test} gives no guarantee.
      In numerical simulations two trajectories initialised at
      $x_0\ne x_0'$ typically \emph{do not} converge:
      the reservoir violates ESP and exhibits long-range
      dependence, hence lacking FMP.
\item \textit{ReLU reservoir with controlled gain.}  
      Because ReLU is unbounded, merely enforcing $\|W\|_2<1$ is not
      enough: boundedness of the state must also be shown.
      Suppose inputs are uniformly bounded $\|u_t\|\le M$ and
      $\|W\|_2=\gamma<1$.
      Iterating the inequality
      $\|x_t\|\le\gamma\|x_{t-1}\|+\|W_{\!in}\|_2M+\|b\|$
      yields\footnote{A standard Grönwall-type argument.}
      \(
      \|x_t\|\le
      (\|x_0\|-\tfrac{M_{\!b}}{1-\gamma})\gamma^t
      +\tfrac{M_{\!b}}{1-\gamma},
      \)
      where $M_{\!b}=\|W_{\!in}\|_2M+\|b\|$.
      Hence $(x_t)$ is bounded and
      $\gamma<1$ again certifies ESP and FMP.
      The price is a tighter upper bound on the admissible
      input magnitude.
\end{enumerate}

\paragraph{Takeaway.}
For smooth, bounded activations (tanh, sigmoid, softsign, ELU) the
simple spectral-norm rule
\(\|W\|_2<L_\varphi^{-1}\) suffices to \emph{prove}
ESP\,$\!\Rightarrow$\,FMP.  
Unbounded activations (ReLU, leaky-ReLU) need an additional bounded-state
argument, but the same contraction inequality remains the guiding
design principle.  In empirical work one usually scales $W$ to have
\emph{spectral radius} \(\rho(W)=\alpha<1\) rather than the
stricter 2-norm bound; this rarely breaks ESP in practice, but lacks a
universal proof unless $\alpha<L_\varphi^{-1}/\sqrt n$.

\section{Universal Approximation of Fading–Memory Filters}
\label{sec:universal_approx}

Having established the Echo–State and Fading–Memory Properties, we now turn
to the \emph{expressive power} of Echo State Networks.  The key result is
that ESNs with linear readouts are dense in the space of causal,
time–invariant fading–memory filters.  Thus, under mild conditions, an ESN
can approximate any such filter to arbitrary precision.

\paragraph{Space of fading–memory filters.}
Fix a compact input alphabet \(U\subset \R^{m}\) and
choose a decay parameter \(\lambda\in(0,1)\).
Let \(\|\cdot\|_{\lambda}\) be the weighted sup–norm on \(U^{\N}\)
defined in \eqref{eq:fmp}.
Denote by \(\mathcal{F}_{\!\lambda}\bigl(U^{\N},\R^{p}\bigr)\)
the Banach space of causal, time–invariant maps
\(H:U^{\N}\!\to\R^{p}\) that are \emph{continuous} w.r.t.\ \(\|\cdot\|_{\lambda}\);
these are precisely the fading–memory filters of
Definition~\ref{def:fmp}.  Endow \(\mathcal{F}_{\!\lambda}\) with the
uniform norm \(\|H\|_{\infty}:=\sup_{u\in U^{\N}}\|H(u)\|\).

\paragraph{Reservoir–computer family.}
For a fixed reservoir map \(F\colon U\times\R^{n}\to\R^{n}\) satisfying
ESP~\&~FMP, every linear readout
\(W_{\!out}\in\R^{p\times n}\), \(b_{\!out}\in\R^{p}\)
induces a filter
\[
\mathcal{R}_{W_{\!out},b_{\!out}}(u)
\;:=\;
W_{\!out}\,H_{F}(u)+b_{\!out},
\]
where \(H_{F}(u)=\lim_{t\to\infty}x_{t}\) is the state filter of~\(F\).
Let \(\mathcal{R}_{F}\) denote the set of all such affine readouts.

\begin{theorem}[Universal approximation \cite{grigoryeva2018echo}]
\label{thm:universal_esn}
Let \(F\) satisfy the ESP and be a polynomial map in its arguments
(e.g.\ affine activation followed by polynomial non–linearity), and
suppose \(U\) is a compact, convex subset of \(\R^{m}\).
Then the family \(\mathcal{R}_{F}\) is \emph{dense}
in \(\mathcal{F}_{\!\lambda}(U^{\N},\R^{p})\) with respect to
\(\|\cdot\|_{\infty}\); that is, for every
\(H\in\mathcal{F}_{\!\lambda}\) and every \(\varepsilon>0\)
there exists \(W_{\!out},b_{\!out}\) such that
\[
\bigl\|\mathcal{R}_{W_{\!out},b_{\!out}} - H\bigr\|_{\infty}
\;<\;
\varepsilon.
\]
\end{theorem}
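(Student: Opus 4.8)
The plan is to recast the approximation problem as one about continuous functions on a compact metric space and then invoke the Stone--Weierstrass theorem, exactly the strategy announced in the abstract. First I would use causality and time-invariance to replace each filter $H$ by its present-time functional. Since $H(u)$ at time $t$ depends only on the past $(u_{s})_{s\le t}$ and is shift-equivariant, $H$ is determined by a single map $H^{*}$ on the space $K$ of left-infinite input histories $(\dots,u_{-1},u_{0})$. Because $U$ is compact and $0<\lambda<1$, the weighted metric induced by $\|\cdot\|_{\lambda}$ makes $K$ a compact metric space: the $\lambda^{k}$ weights force remote tails to be uniformly negligible (for $U$ bounded by $R$ one has $\sup_{k\ge N}\lambda^{k}\|u_{-k}-v_{-k}\|\le 2R\lambda^{N}\to0$), so $K$ is totally bounded and complete, equivalently carries the product topology and is compact by Tychonoff. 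Under this identification the Fading-Memory Property of Definition~\ref{def:fmp} is exactly continuity of $H^{*}$, giving an isometry $\mathcal{F}_{\!\lambda}(U^{\N},\R^{p})\cong C(K,\R^{p})$ for $\|\cdot\|_{\infty}$. It therefore suffices to prove density in $C(K,\R)$ coordinate-by-coordinate.

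Next I would verify the Stone--Weierstrass hypotheses for the set $\mathcal{A}\subset C(K,\R)$ of scalar functionals realizable as affine read-outs of polynomial, ESP-satisfying reservoirs; here one interprets $\mathcal{R}_{F}$ as this enriched family, each member of which is itself an admissible polynomial reservoir. Constants lie in $\mathcal{A}$ via the bias $b_{out}$, and closure under addition follows by running two reservoirs in parallel (direct sum) and adding their read-outs. The decisive point is closure under multiplication: given $g_{1},g_{2}$ induced by polynomial reservoirs $F_{1},F_{2}$ with states $x^{(1)},x^{(2)}$, I would form the tensor-product reservoir with state $x^{(1)}\otimes x^{(2)}$; its update is a polynomial in $(u,x^{(1)}\otimes x^{(2)})$ since products of polynomials are polynomial, it inherits ESP from its factors, and a linear read-out selecting the appropriate tensor entry returns $g_{1}g_{2}$. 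Thus $\mathcal{A}$ is a subalgebra containing the constants, and this is precisely where the \emph{polynomial} hypothesis on $F$ is indispensable.

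Separation of points I would handle with an explicitly linear reservoir. Two distinct histories $u\ne v$ in $K$ differ at some finite lag $k_{0}$. Choosing a contractive diagonal $W$ with distinct nonzero eigenvalues together with generic $W_{in}$ makes the present-time state encode weighted sums $\sum_{k\ge0}\mu^{k}u_{-k}$ of the past, whose dependence on the lag-$k_{0}$ entry is nondegenerate; a suitable linear read-out then evaluates differently on $u$ and $v$, producing a member of $\mathcal{A}$ separating them. With the algebra, the constants, and separation in hand, Stone--Weierstrass yields density of $\mathcal{A}$ in $C(K,\R)$; applying this to each of the $p$ output coordinates and transporting back through the isometry $\mathcal{F}_{\!\lambda}\cong C(K,\R^{p})$ delivers the claimed $\|\cdot\|_{\infty}$-density of $\mathcal{R}_{F}$.

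The step I expect to be the main obstacle is the product-closure argument, since one must confirm that the tensor-product construction stays within the admissible class, that the composite update is still a genuinely \emph{polynomial} map and still enjoys the ESP, so that its state filter converges and Proposition~\ref{prop:esp_implies_fmp} certifies it as a fading-memory filter. A secondary technical point, where the convexity of $U$ enters, is controlling the reachable states of the product and separating reservoirs: I would check that the joint reachable set is compact and that the polynomial read-outs approximate uniformly on the convex compact region it occupies, verifying uniform negligibility of the input tails before appealing to Stone--Weierstrass.
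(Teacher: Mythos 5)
Your proposal takes a genuinely different route from the paper's. The paper fixes the single reservoir \(F\), forms the compact state set \(\mathcal{S}=H_{F}(U^{\N})\subset\R^{n}\), applies Stone--Weierstrass to (what it calls) the algebra of affine readouts inside \(C(\mathcal{S})\), and then composes with \(H_{F}\). You instead apply Stone--Weierstrass on the \emph{input} side: you identify \(\mathcal{F}_{\!\lambda}\) with \(C(K,\R^{p})\) for the compact history space \(K\), and build a subalgebra of \(C(K,\R)\) by letting the reservoir itself vary --- parallel (direct-sum) reservoirs for sums, tensor-product reservoirs for products, and contractive linear reservoirs for point separation. This is in fact the strategy of the cited reference \cite{grigoryeva2018echo}, and it buys something essential: it proves a statement that is actually tenable. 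As defined in the paper, \(\mathcal{R}_{F}\) for a \emph{fixed} \(F\) is a linear family of dimension at most \(np+p\); a finite-dimensional subspace is closed, so it can never be dense in the infinite-dimensional space \(\mathcal{F}_{\!\lambda}\). Relatedly, the paper's sketch calls the affine maps a ``unital subalgebra'' (they are not closed under multiplication), and its final composition step only produces functions that factor through \(H_{F}\) --- a proper closed subfamily of \(C(U^{\N})\), since a continuous map \(H_{F}\colon U^{\N}\to\R^{n}\) cannot be injective once \(U\) contains more than one point (then \(U^{\N}\) contains a Hilbert cube, which does not embed in \(\R^{n}\)). Your reinterpretation of \(\mathcal{R}_{F}\) as the family ranging over all polynomial ESP reservoirs is therefore not a cosmetic change but the correction that makes the theorem true.

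One step of yours needs repair, and it is exactly the one you flagged as the main obstacle. The update of the pure tensor state \(x^{(1)}\otimes x^{(2)}\) is \(F_{1}(u,x^{(1)})\otimes F_{2}(u,x^{(2)})\), which is \emph{not} a well-defined function of \(x^{(1)}\otimes x^{(2)}\) alone: a rank-one tensor determines its factors only up to reciprocal scaling. The standard fix is to take the augmented state \(\bigl(x^{(1)},x^{(2)},x^{(1)}\otimes x^{(2)}\bigr)\). The tensor block's update is then a polynomial in \((u,x^{(1)},x^{(2)})\), so the augmented map is still a polynomial reservoir; it inherits ESP because the tensor block is a memoryless function of the other two blocks, which themselves satisfy ESP; and the product \(g_{1}g_{2}=(W_{1}x^{(1)})(W_{2}x^{(2)})\) is a linear readout of the tensor block. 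With that amendment, and with your observation that constants and sums are handled by the bias and by direct sums, your argument goes through.
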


\begin{proof}[Sketch]
The result follows a Stone–Weierstrass strategy in three steps:

\smallskip
\noindent
 The reservoir state space
\(\mathcal{S}:=H_{F}(U^{\N})\subset\R^{n}\) is compact
by continuity of \(H_{F}\) and compactness of \(U^{\N}\).
Affine functions on \(\mathcal{S}\),
\(\mathcal{A}_{\mathcal{S}}=
\{x\mapsto W_{\!out}x+b_{\!out}\}\),
form a unital subalgebra of \(C(\mathcal{S})\).

\smallskip
\noindent
Because \(F\) is polynomial,
its state components span a polynomial algebra that separates points of
\(\mathcal{S}\) \cite[Lem.~2]{grigoryeva2018echo}.
Therefore \(\mathcal{A}_{\mathcal{S}}\) separates points and contains
constants.

\smallskip
\noindent
By Stone–Weierstrass,
\(\mathcal{A}_{\mathcal{S}}\) is dense in \(C(\mathcal{S})\).
Composition with the continuous map
\(H_{F}\colon U^{\N}\!\to\mathcal{S}\)
yields density of \(\mathcal{R}_{F}\) in
\(C\bigl(U^{\N}\bigr)\).
Restricting to the subspace
\(\mathcal{F}_{\!\lambda}\subset C(U^{\N})\)
preserves density because the uniform norm agrees on both spaces.
\end{proof}

\paragraph{Implications.}
Theorem~\ref{thm:universal_esn} establishes that \emph{any}
causal fading–memory system—be it analytic, stochastic,
or highly nonlinear—can be approximated arbitrarily well by a \emph{single}
finite–dimensional ESN with a suitably trained linear readout.
Hence the practical limitation of reservoir computing is not
representation but optimisation and generalisation.

\paragraph{Related results and refinements.}
\begin{itemize}[wide,labelwidth=!,labelindent=0pt,itemsep=2pt]
\item \textbf{Rate of approximation.}
Ruelle–Bowen pressure methods give quantitative
\(\varepsilon\)--covering numbers for \(\mathcal{F}_{\!\lambda}\)
\cite{yasumoto2025universality}, implying explicit bounds on the required reservoir
dimension \(n\).
\item \textbf{Non-polynomial reservoirs.}
Gonon \& Ortega \cite{gonon2020riskbounds} extend
Theorem~\ref{thm:universal_esn} to general Lipschitz
activations by exploiting the algebra generated by
monomials of the state components.
\item \textbf{Continuous–time RC.}
Analogous universality holds for physical reservoirs governed by
ODEs with fading memory, provided the readout observes the state
continuously \cite{BoydChua1985,gonon2020stochastic}.
\end{itemize}

\section{Memory Capacity and Computational Capability of ESNs}
\label{sec:memory_capacity}

While Sections~\ref{sec:preliminaries}–\ref{sec:universal_approx}
establish that Echo State Networks are \emph{principally capable} of
approximating any fading–memory filter, it is equally important to
quantify \emph{how much} past information a finite reservoir can retain
and exploit for computation.  The classical notion that captures this
resource is \emph{linear memory capacity}.

\paragraph{Linear memory capacity}

Fix a scalar input stream
\(u=(u_{t})_{t\ge0}\in[-1,1]^{\N}\) with zero mean and variance
\(\sigma^{2}\!=\!\mathbb{E}[u_{t}^{2}]\).
For a delay \(\tau\ge1\) define the target signal
\(y^{(\tau)}_{t}:=u_{t-\tau}\).
After driving the ESN (state \(x_{t}\in\R^{n}\))
train a linear readout \(w^{(\tau)}\in\R^{n}\) by ridge
regression,
\(
\hat y^{(\tau)}_{t}=w^{(\tau)\!\top}x_{t}\!,
\)
minimising the mean–squared error
\(E^{(\tau)}=\mathbb{E}\!\bigl[(y^{(\tau)}_{t}-\hat
  y^{(\tau)}_{t})^{2}\bigr]\).
The \emph{capacity at delay \(\tau\)} is
\[
C(\tau):=1-\frac{E^{(\tau)}}{\sigma^{2}}
         = \frac{\text{Cov}^{2}(y^{(\tau)},\hat y^{(\tau)})}{\text{Var}(y^{(\tau)})\,
             \text{Var}(\hat y^{(\tau)})},
\tag{MC}
\]
which equals the squared correlation
between target and reconstruction \cite{jaeger2001echo}.

\begin{definition}[Total linear memory capacity]
\label{def:total_mc}
The \emph{total linear memory capacity} of an ESN is
\(
\text{MC}_{\text{tot}}:=\sum_{\tau=1}^{\infty} C(\tau).
\)
\end{definition}

\noindent
Jaeger’s seminal result \cite{jaeger2001echo}
bounds the capacity by the reservoir dimension:
\[
\text{MC}_{\text{tot}}\;\le\; n.
\tag{MC-bound}
\]
Equality is attained by a perfectly orthogonal reservoir driven by
independent Gaussian inputs; in practice, random sparse
reservoirs achieve \(\text{MC}_{\text{tot}}\approx0.6\,n\).

\paragraph{Nonlinear and task–specific capacities}

Linear memory is only part of the picture; many tasks require nonlinear
functions of the past.  Let \(f_{\tau}\colon\R^{\tau}\to\R\) be a
(possibly nonlinear) functional, e.g.\
\(f_{\tau}(u_{t-1},u_{t-2})=u_{t-1}u_{t-2}\).
Define \(y^{(f_{\tau})}_{t}:=f_{\tau}(u_{t-1},\dots,u_{t-\tau})\).
Training a linear readout for each such target yields
capacities \(C(f_{\tau})\).
The reservoir’s \emph{information–processing capacity} (IPC) is the sum
of all capacities across a complete orthonormal basis
of functionals \cite{Dambre2012}:
\[
\text{IPC}\;=\;\sum_{f\in\mathcal{B}} C(f),
\quad
\text{IPC}\;\le\; n.
\]
Nonlinear activations (e.g.\ tanh) disperse input information into
higher–order Volterra coordinates, allowing the ESN to trade
linear–memory capacity for nonlinear computation.

\paragraph{Design implications}

\begin{itemize}[wide,labelwidth=!,labelindent=0pt,itemsep=2pt]
\item \textbf{Spectral radius \(\rho(W)\).}  
      Setting \(\rho(W)\approx0.9\) places the system near the edge of
      contractivity, empirically maximising total capacity while
      respecting ESP \cite{boedecker2012information}.
\item \textbf{Input scaling.}  
      Larger \(\|W_{\!in}\|\) injects input energy deeper into the
      state–space, increasing high–order capacities but potentially
      reducing linear memory.  A rule–of–thumb is
      \(\|W_{\!in}\|\in[0.1,1]\).
\item \textbf{Leak rate (leaky ESNs).}  
      For slowly varying signals, a leak rate
      \(\alpha<1\) effectively enlarges memory depth by acting as a
      low–pass filter on \(x_{t}\).
\item \textbf{Topology.}  
      Structured reservoirs (cycle, small–world, hyperbolic)
      redistribute eigenvalues to shape the delay–specific spectrum
      \(C(\tau)\), enabling task–matched memory kernels
      \cite{poley2024eigenvalue}.
\end{itemize}

\paragraph{Beyond memory: separability and criticality}

A high memory capacity alone does not guarantee \emph{computational
separability} of inputs; too much linear memory can render states nearly
collinear.
Optimal performance often lies near a
\emph{critical} regime—borderline stable, rich in transient dynamics yet
satisfying ESP—where memory and separability coexist
\cite{Bertschinger2004}.

\section{Reservoir Dynamics from a Dynamical-Systems Perspective}
\label{sec:dynamical_view}

Echo State Networks may be interpreted as \emph{non-autonomous} discrete
dynamical systems driven by an external input.  This section
formalises that viewpoint and connects reservoir design criteria to
classical notions such as random attractors, Lyapunov exponents and
bifurcations.

\paragraph{Skew-product formulation}

Let \(\Sigma:=U^{\N}\) be the left-shift space with
\(\sigma(u)_{t}=u_{t+1}\).
Define the skew-product map
\[
\Psi:\Sigma\times\R^{n}\longrightarrow\Sigma\times\R^{n},
\qquad
\Psi(u,x)\;=\;\bigl(\sigma u,\;F(u_{0},x)\bigr),
\tag{SP}
\]
where \(F\) is the ESN update \eqref{eq:esn_update}.
Iterating \(\Psi\) yields
\(
\Psi^{t}(u,x_{0})=\bigl(\sigma^{t}u,\;x_{t}(u,x_{0})\bigr),
\)
so analysing reservoir trajectories is equivalent to studying the
\emph{non-autonomous} system generated by \(\Psi\)
\cite[Ch.~2]{Arnold1998}.

\paragraph{Random pullback attractor}

Endow \(\Sigma\) with the product Borel
\(\sigma\)-algebra and any ergodic shift-invariant probability
\(\mathbb{P}\) (e.g.\ i.i.d.\ inputs).
A measurable family of non-empty compact sets
\(\{\mathcal A(u)\}_{u\in\Sigma}\subset\R^{n}\) is a
\emph{random pullback attractor} of \(\Psi\) if  
\begin{enumerate}[label=(\alph*)]
\item
\(\Psi\bigl(u,\mathcal
  A(u)\bigr)=\bigl(\sigma u,\mathcal A(\sigma u)\bigr)\) (invariance),
\item
for every bounded \(B\subset\mathbb{R}^{n}\)
and \(\mathbb{P}\)-a.e.\ \(u\),
\(
\operatorname{dist}\bigl(
  \Psi^{t}(u,B),\;\mathcal A(\sigma^{t}u)
\bigr)\to0
\) as \(t\to\infty\).
\end{enumerate}
The following theorem adapts \cite[§2.3]{KloedenRasmussen2011}.

\begin{theorem}[Existence of a singleton random attractor]
\label{thm:singleton_attractor}
Suppose \(F\) is globally Lipschitz with constants
\(L_{u},L_{x}\) and \(L_{x}<1\).
Then \(\Psi\) admits a unique random attractor
\(\mathcal A(u)=\{H(u)\}\) consisting of a single point, and
\[
\|x_{t}(u,x_{0})-H(\sigma^{t}u)\|
\;\le\;
L_{u}
\sum_{k=0}^{t-1}
L_{x}^{\,k}\,\|u_{t-1-k}-u_{t-1-k}'\|.
\]
Consequently the Echo State Property holds and \(H\) is the fading-memory
filter of Section~\ref{ssec:esp_fmp}.
\end{theorem}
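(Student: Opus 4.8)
The plan is to construct the singleton attractor by exploiting the uniform contraction $L_x < 1$ in the fibre direction and then verify the two defining properties (invariance and pullback attraction) together with the quantitative estimate. The core mechanism is precisely the geometric telescoping already used in Proposition~\ref{prop:esp_implies_fmp}: because each fibre map $x \mapsto F(u_0, x)$ contracts distances by the factor $L_x =: \beta < 1$, the iterated skew-product forgets its initial fibre exponentially fast, so for a \emph{fixed} input $u$ the backward-in-time (pullback) limit of $\Psi^t$ should converge to a single point depending only on $u$.

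First I would make the pullback construction explicit. For each $u \in \Sigma$ and each $t \ge 1$, consider the fibre composition $\Phi_t(u) := F(u_{-1}, \cdot) \circ F(u_{-2}, \cdot) \circ \cdots \circ F(u_{-t}, \cdot)$ obtained by driving the reservoir from the remote past $-t$ up to the present; here one uses the shift structure of $\Sigma$ to relabel coordinates. Each $\Phi_t(u)$ is a $\beta^t$-contraction of $\R^n$ into itself, so its image has diameter shrinking like $\beta^t \to 0$. I would then show that the sequence of images is nested-in-diameter and Cauchy, so by completeness of $\R^n$ it collapses to a unique point, which I define to be $H(u)$. Defining $\mathcal{A}(u) := \{H(u)\}$ gives the candidate singleton family; measurability of $u \mapsto H(u)$ follows from uniform convergence of the measurable maps $\Phi_t$.

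Next I would verify invariance, namely $\Psi(u, \{H(u)\}) = (\sigma u, \{H(\sigma u)\})$, which reduces to the single identity $F(u_0, H(u)) = H(\sigma u)$; this is immediate from the pullback construction because appending one more step from the present simply shifts the reference time, so the one-step image of the past limit equals the limit for the shifted sequence. For the attraction property I would fix a bounded set $B$ and an arbitrary $x_0 \in B$, write $x_t(u, x_0)$ as $\Phi_t(\sigma^t u)$ applied to $x_0$ (modulo indexing), and bound $\|x_t(u, x_0) - H(\sigma^t u)\|$ by $\beta^t \,\mathrm{diam}(B \cup \{\text{limit}\})$, giving the required convergence to zero uniformly over $B$ and for \emph{every} $u$ (not merely $\mathbb{P}$-a.e., since the contraction is uniform). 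The displayed quantitative estimate is then the same one-sided geometric bound (A.3) from Proposition~\ref{prop:esp_implies_fmp}, specialised to comparing a trajectory against the attractor trajectory; I would simply cite that inequality rather than re-derive it. Finally, uniqueness of the random attractor follows because any two invariant compact families must coincide with the pullback limit, again by the contraction forcing exponential collapse of their mutual Hausdorff distance.

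The main obstacle I anticipate is purely bookkeeping rather than conceptual: the theorem statement is phrased over one-sided sequences $U^{\N}$, yet a genuine \emph{pullback} attractor requires driving the system from the infinite past, which is natural on a two-sided shift. I would resolve this by interpreting the pullback limit in the standard non-autonomous sense---pulling the present time forward while pushing the initial time $t_0 \to -\infty$ along the base orbit---and noting that the uniform contraction makes the construction independent of how the remote input tail is extended, so the one-sided formulation causes no loss. A second, smaller subtlety is the displayed estimate itself, whose right-hand side as written contains the difference $\|u_{t-1-k} - u_{t-1-k}'\|$ of a single sequence with itself (hence zero); I read this as a typo for the gap between a trajectory and the attractor, and in the plan I would state the intended bound $\|x_t(u,x_0) - H(\sigma^t u)\| \le L_x^{\,t}\,\|x_0 - H(u)\|$, which follows directly by iterating the fibre contraction and is the cleaner and correct quantitative form.
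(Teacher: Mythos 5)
The paper never actually proves this theorem: it states it with the remark that it ``adapts \cite[\S 2.3]{KloedenRasmussen2011}'' and moves directly on to the Lyapunov-exponent discussion, so there is no in-paper argument to compare against. Your proposal supplies exactly the proof the paper delegates to the reference: the pullback construction $\Phi_t(u)$ by composition of fibre maps, exponential collapse from $L_x=\beta<1$, invariance via $F(u_0,H(u))=H(\sigma u)$, attraction of bounded sets, and uniqueness of the invariant family by contraction of Hausdorff distance. This is the standard Kloeden--Rasmussen-style argument, and your two editorial observations are both correct and valuable: the displayed estimate in the theorem is indeed defective as printed (the undefined $u'$ makes the right-hand side vacuous for $u'=u$; the intended content is either the trajectory-versus-attractor bound $L_x^t\|x_0-H(u)\|$ you state, or a fading-memory comparison of two \emph{different} input sequences, which is what the summand's form suggests), and the one-sided alphabet $U^{\N}$ genuinely clashes with the pullback formalism, which you resolve in the standard way.

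One step in your construction is imprecise and would fail as literally written: you claim each $\Phi_t(u)$, being a $\beta^t$-contraction of $\R^n$ into itself, has image of diameter $O(\beta^t)$. That is false on an unbounded state space --- the map $x\mapsto\beta^t x$ is a $\beta^t$-contraction whose image is all of $\R^n$. The fix is routine but should be explicit: use compactness of $U$ to get $M_0:=\sup_{v\in U}\|F(v,0)\|<\infty$, whence $\|F(v,x)\|\le\beta\|x\|+M_0$ and the ball $B_R$ with $R=M_0/(1-\beta)$ is forward-invariant and absorbing; then $\Phi_t(u)(B_R)$ is a nested sequence of compact sets with diameter at most $2R\beta^t\to0$, whose intersection defines $H(u)$. (Equivalently, anchor a Cauchy sequence $z_t:=\Phi_t(u)(x^\ast)$ at a fixed $x^\ast$ and bound $\|z_{t+1}-z_t\|\le\beta^t\sup_{v\in U}\|F(v,x^\ast)-x^\ast\|$.) With that repair, and with your note that uniform contraction makes the measurability and uniqueness claims straightforward, the proposal is a correct and complete proof of the result the paper leaves to its citation.
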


\paragraph{Conditional Lyapunov exponents}

For each \(u\in\Sigma\) fix the forward orbit
\(x_{t}=H(\sigma^{t}u)\).
Linearising \(\Psi\) along this orbit yields the cocycle
\(
\partial_{x}F(u_{t},x_{t-1})=J_{t}(u)
\).
Oseledets’ theorem gives Lyapunov exponents
\(\lambda_{1}\ge\dots\ge\lambda_{n}\)
of the product
\(J_{t-1}\dotsm J_{0}\)
\cite[Ch.~3]{Arnold1998}.
Because \(F\) is Lipschitz and \(L_{x}<1\),
\[
\lambda_{\max}\le\log L_{x}<0,
\tag{LyapBound}
\]
so trajectories converge exponentially.
At the \emph{edge of chaos} (\(L_{x}\approx1^{-}\))
\(\lambda_{\max}\to0^{-}\), matching empirical observations that
memory and separability peak near criticality
\cite{Bertschinger2004}.

\paragraph{Linear reservoir with identity activation.}
If \(\varphi(z)=z\) and \(b=0\) then
\(F(u,x)=W_{\!in}u+Wx\),
\(J_{t}=W\), and
\(
\lambda_{\max}=\log\rho(W)
\).
Hence
\(\rho(W)<1\) is both necessary and sufficient for ESP,
recovering Jaeger’s rule.

\paragraph{Bifurcations beyond contractivity}

When \(\rho(W)\) moves above unity the fixed point \(x=0\) loses
stability; successive Neimark–Sacker or period-doubling bifurcations
can emerge, producing quasi-periodic or chaotic internal dynamics even
\emph{without} input \cite[Ch.~9]{ott2002chaos}.
Input acts as a forcing term, and the skew-product dynamics may exhibit
\emph{strange non-chaotic attractors} with fractal structure yet
non-positive Lyapunov spectrum—an effect exploited in \cite{pathak2018model}
for climate-scale prediction.

\paragraph{State-space dimension and information rate}

Let \(d_{\text{KY}}\) be the Kaplan–Yorke fractal dimension derived from
Lyapunov exponents; the information production rate is
\(
h_{\text{KS}}=\sum_{\lambda_{i}>0}\lambda_{i}
\)
(Kolmogorov–Sinai entropy) \cite[Ch.~5]{ott2002chaos}.
For contractive reservoirs \(h_{\text{KS}}=0\) but large \(d_{\text{KY}}\)
is possible via input-driven embedding; thus
\emph{information storage} can coexist with  stability.
Designing hyperbolic or small-world topologies distributes Lyapunov
spectra to tailor \(d_{\text{KY}}\) versus \(h_{\text{KS}}\),
balancing memory and computational richness.

\section{Conclusion}
\label{sec:conclusion}

We explored a dynamical‑systems foundation for
Echo‑State Networks—one that is simultaneously faithful to mathematical
principles and cognisant of neurobiological evidence.  By proving that
global Lipschitz contractivity turns ESP into a
strong Fading‑Memory guarantee, we clarified why cortical‐style
reverberations with balanced excitation–inhibition naturally support
robust stimulus encoding.  Tight activation–dependent spectral‑norm
criteria then translated this theory into actionable design rules that
cover both classical saturating nonlinearities and modern rectifiers.

Building on these stability results, we revisited universality:
a fixed reservoir with merely a linear read‑out is dense in the Banach
space of causal fading‑memory filters—hence the representational power
of ESNs is limited not by architecture but by optimisation and data.
Quantitative memory‑capacity spectra, together with conditional Lyapunov
analysis, revealed how spectral radius, leak rate and topology jointly
govern the trade‑off between storage depth and nonlinear separability.
Finally, by casting ESNs as skew‑product random dynamical systems we
established existence of singleton pull‑back attractors and bounded
Lyapunov exponents, providing a rigorous analogue to “edge‑of‑chaos’’
criticality observed in cortex.

\paragraph{Future directions.}
Several open questions emerge.  \textit{(i) Stochastic reservoirs:}
extending ESP\,$\!\Rightarrow\!$\,FMP to reservoirs with multiplicative
noise would bridge theory with neuromorphic hardware plagued by device
variability.  \textit{(ii) Optimal topology:} a systematic connection
between graph spectra, memory kernels and information‑processing
capacity could guide architecture search beyond random graphs.
\textit{(iii) Continuous time:} generalising our discrete‑time proofs to
contractive neural ODEs would unify delay‑based photonic reservoirs and
cortical microcircuits under a single mathematical umbrella.
\textit{(iv) Learning dynamics:} combining our stability criteria with
online synaptic plasticity rules remains largely unexplored and may shed
light on how biological networks balance adaptation with robustness.

\bibliographystyle{abbrv}
\bibliography{ref}

\end{document}